\newtheorem{theorem}{Theorem}
\newtheorem{definition}{Definition}
\title{WISCA: A Lightweight Model Transition Method to Improve LLM Training via Weight Scaling}
\author{
    Jiacheng Li\textsuperscript{\rm 1},
    \ Jianchao Tan\textsuperscript{\rm 1} \thanks{\ \ Corresponding author.},
    \ Zhidong Yang\textsuperscript{\rm 3},
    \ Pingwei Sun\textsuperscript{\rm 1},
    \ Feiye Huo\textsuperscript{\rm 1}, 
    \ Jiayu Qin\textsuperscript{\rm 1} 
    \\
    \ {\bf Xiangyu Zhang}\textsuperscript{\rm 2},
    \ {\bf Maoxin He}\textsuperscript{\rm 4},
    \\
    \ {\bf Yerui Sun}\textsuperscript{\rm 1},
    \ {\bf Yuchen Xie}\textsuperscript{\rm 1},
    \ {\bf Guangming Tan}\textsuperscript{\rm 2},
    \ {\bf Weile Jia}\textsuperscript{\rm 2} ,
    \ {\bf Xunliang Cai}\textsuperscript{\rm 1},
    \ {\bf Tong Zhao}\textsuperscript{\rm 2} \footnotemark[1], \\
    \textsuperscript{\rm 1}Meituan, Beijing, China \\
    \textsuperscript{\rm 2}University of Chinese Academy of Sciences, Beijing, China \\
    \textsuperscript{\rm 3}Hong Kong University of Science and Technology, Hong Kong SAR, China \\
    \textsuperscript{\rm 4}Xiamen University, Xiamen, China \\
    \texttt{\{lijiacheng14, tanjianchao02\}@meituan.com},
    \texttt{\{jiaweile, tongzhao.zhaot\}@gmail.com}
}
\begin{document}

\maketitle
\begin{abstract}
Transformer architecture gradually dominates the LLM field. Recent advances in training optimization for Transformer-based large language models (LLMs) primarily focus on architectural modifications or optimizer adjustments. However, these approaches lack systematic optimization of weight patterns during training. Weight pattern refers to the distribution and relative magnitudes of weight parameters in a neural network. To address this issue, we propose a Weight Scaling method called WISCA to enhance training efficiency and model quality by strategically improving neural network weight patterns—without changing network structures. By rescaling weights while preserving model outputs, WISCA indirectly optimizes the model’s training trajectory. Experiments demonstrate that WISCA significantly improves convergence quality (measured by generalization capability and loss reduction), particularly in LLMs with Grouped Query Attention (GQA) architectures and LoRA fine-tuning tasks. Empirical results show \textbf{5.6\%} average improvement on zero-shot validation tasks and \textbf{2.12\%} average reduction in training perplexity across multiple architectures.
\end{abstract}

\section{Introduction}






The weight pattern is defined as the arrangement and scaling of parameters across layers in a neural network, which plays a pivotal role in the convergence of neural networks. For example, although a two-layer fully connected network theoretically can approximate any function (universal approximation theorem)~\citet{256500}, its performance on complex tasks is often suboptimal due to poorly structured weight patterns. In contrast, modern architectures (e.g., CNNs, GNNs, RNNs, Transformers) achieve superior results by implicitly optimizing weight patterns for specific tasks. These architectures reduce optimization complexity and enhance representational capability through structured designs (e.g., local receptive fields in CNNs~\citet{lecun2002gradient}, attention mechanisms in Transformers~\citet{vaswani2017attention}), demonstrating that an effective weight pattern is a key factor in their success.

\begin{figure}[t]
\centering
\includegraphics[width=\columnwidth]{./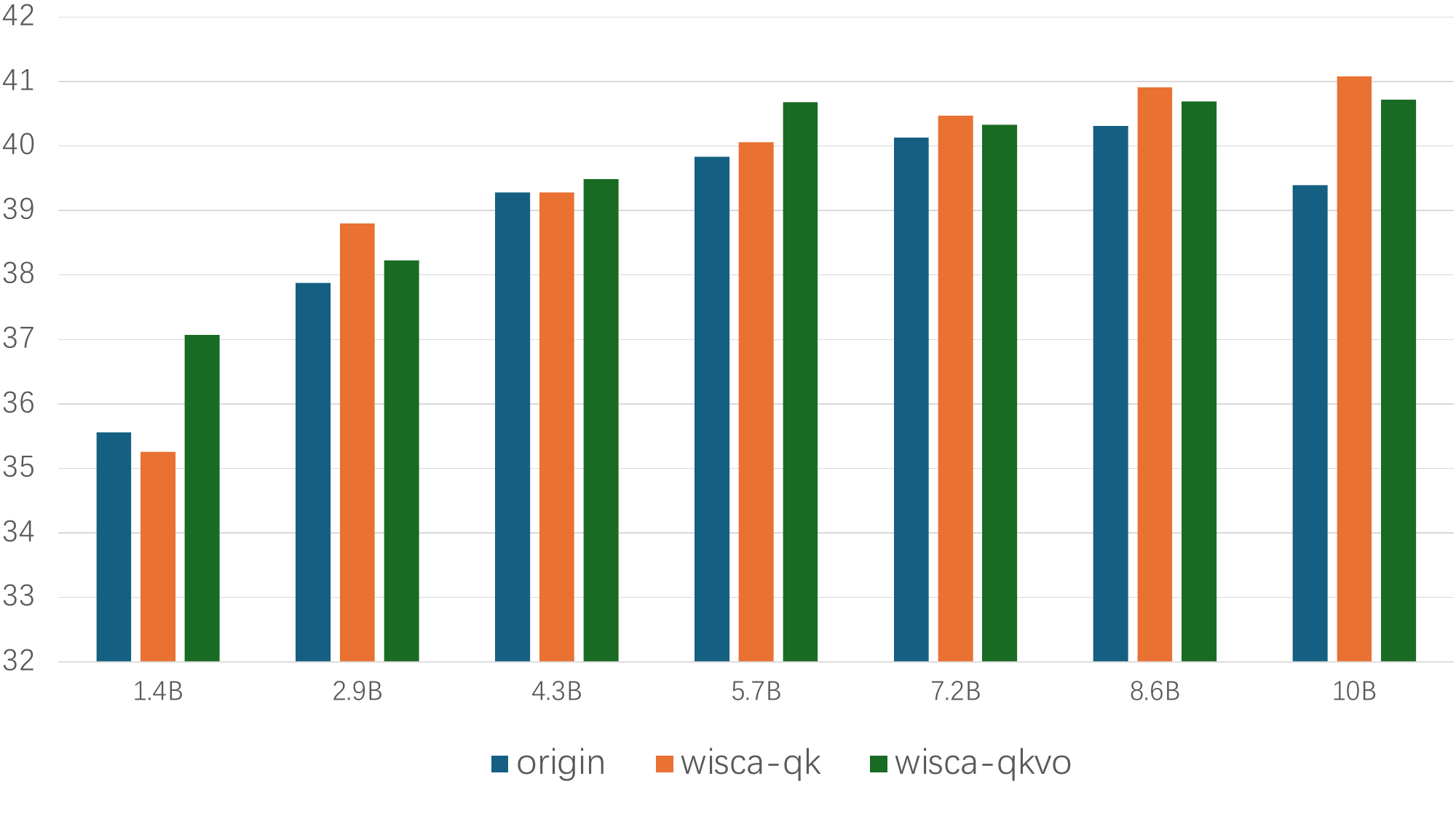}
\caption{The comparisons of average zero-shot evaluation results at different training steps using WISCA and original methods on llama-moe-5B-A0.8B. All metrics can be seen in Appendix.}
\label{fig:wisca-bar}
\vskip -0.2in
\end{figure}

A typical example of a poor weight pattern in a neural network that leads to suboptimal model performance is the sharp minimum problem. As demonstrated in \citet{keskar2016large}, training with excessively large batch sizes increases the likelihood of converging to sharp minima. Models trapped in sharp minima exhibit significantly worse generalization on test datasets compared to those in flat minima. This discrepancy arises from the bias between the training and test sets: Firstly, the loss landscapes computed on training and test datasets diverge significantly due to their distributional differences. Secondly, because of the distributional relationship between the two datasets being close, the losses of them exhibit relatively similar but inconsistent  trends in parameter spaces. The loss landscape curves of them are parallel but non-overlapping trajectories. Thus, the side impact of sharp minima on generalization is apparent. It amplifies sensitivity to outliers in the dataset and weakens the model's robustness as a result.

In non-convex optimization, rugged convergence paths, characterized by sharp minima and highly curved regions in the loss landscape, pose significant challenges to gradient-based optimization. Such paths force the optimizer to wander through narrow valleys and saddle points, leading to premature convergence to suboptimal solutions with poor generalization (e.g., sharp minima). In contrast, a smooth, flat convergence path allows the gradients to guide parameters in approaching a wider minimum, where small perturbations in the weights have minimal side-impact on the loss. This stability inherently improves generalization, as models in flat regions are less sensitive to outliers in the dataset. This difference highlights the critical role of loss landscape geometry in successfully converging to a wider minimum.

From the perspective of optimizing weight patterns to improve neural network training trajectories in the loss landscape and the results of convergence, we propose the Equivalent Model Theory and an optimization strategy for weight patterns (WISCA). Equivalent Model Theory gives the definition of equivalent models. Consider two models; if they have the same architecture and derive the same output with the same input despite the differences in weight configurations of them. Such two models can be regarded as equivalent models. During training, models can transition to equivalent models, and the training process will continue. WISCA introduces a systematic strategy to guide such transitions to superior weight patterns, thereby indirectly reshaping the training process dynamically. By prioritizing weight configurations that align with smoother optimization landscapes, WISCA enhances the likelihood of converging to a flat minimum with superior generalization performance.

Our contributions can be summarized as follows:

\begin{itemize}

\item We propose the Equivalent Model Theory, which establishes a theoretical foundation for enhancing model training performance through weight pattern optimization. 

\item Based on the proposed theory, we propose a novel model transition strategy called WISCA, which enables Transformer-based models and LoRA architectures to dynamically adjust their weight patterns to more optimal configurations during training while remaining equivalent to the unadjusted model.

\item Extensive experimental results show that our WISCA can improve model performance through dynamically adjusting weights by finding equivalent models.
\end{itemize}

\section{Theory}

\subsection{Sharp Minima leads to Worse Generalization}

\begin{theorem}[Generalization Gap of Sharp vs. Flat Minima]
\label{thm:sharp_flat}
Let $\theta_1$ and $\theta_2$ be two models with identical training loss $L_{\text{train}}(\theta_1) = L_{\text{train}}(\theta_2) = L_0$. If $\theta_1$ resides in a sharp minimum (defined as $\exists~\mathcal{N}(\epsilon), \text{for}~\forall \epsilon \in \mathcal{N}(\epsilon), \text{where}~\|\epsilon\| \to 0,~L(\theta_1 + \epsilon) - L(\theta_1) > L(\theta_2 + \epsilon) - L(\theta_2) $), then the expected validation loss satisfies:
\begin{equation}
    \mathbb{E}[L_{\text{val}}(\theta_1)] > \mathbb{E}[L_{\text{val}}(\theta_2)]
\end{equation}
\end{theorem}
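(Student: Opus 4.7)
The plan is to reduce the generalization comparison to a pointwise inequality in parameter space, using the picture sketched in the introduction: the validation and training loss surfaces are ``parallel but shifted'' because they arise from similar but non-identical data distributions. Concretely, I would model this shift by a small random parameter-space perturbation $\epsilon$ supported in $\mathcal{N}(\epsilon)$, so that for a model evaluated at $\theta$ one has $L_{\text{val}}(\theta) \stackrel{d}{=} L_{\text{train}}(\theta + \epsilon)$. Once this surrogate is in place, the theorem becomes the integrated form of the sharpness hypothesis itself.

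First I would state and justify the surrogate assumption above. The informal justification is that a small change in the data distribution induces, to leading order, a translation of the empirical risk surface; since the hypothesis operates in the regime $\|\epsilon\|\to 0$, this first-order picture is tight. Second, I would expand the expected validation gap as
\begin{align*}
\mathbb{E}[L_{\text{val}}(\theta_1)] - \mathbb{E}[L_{\text{val}}(\theta_2)]
&= \mathbb{E}_{\epsilon}\!\bigl[L_{\text{train}}(\theta_1+\epsilon) - L_{\text{train}}(\theta_2+\epsilon)\bigr],
\end{align*}
and then, using the identical-training-loss hypothesis $L_{\text{train}}(\theta_1) = L_{\text{train}}(\theta_2) = L_0$, rewrite this as
\begin{align*}
\mathbb{E}_{\epsilon}\!\Bigl[\bigl(L_{\text{train}}(\theta_1+\epsilon) - L_{\text{train}}(\theta_1)\bigr) - \bigl(L_{\text{train}}(\theta_2+\epsilon) - L_{\text{train}}(\theta_2)\bigr)\Bigr].
\end{align*}

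Third, I would invoke the sharpness hypothesis directly: for every $\epsilon$ in $\mathcal{N}(\epsilon)$, the integrand is strictly positive by definition of $\theta_1$ lying in the sharper minimum. Monotonicity of expectation then transports this pointwise inequality to the expectation, provided the perturbation distribution places its mass on $\mathcal{N}(\epsilon)$ and is not degenerate at a null set of the inequality. The conclusion $\mathbb{E}[L_{\text{val}}(\theta_1)] > \mathbb{E}[L_{\text{val}}(\theta_2)]$ follows.

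The hard part is really the first step. The theorem is stated somewhat loosely: there is no explicit definition of the randomness over which the expectation is taken, nor of how the training and validation landscapes are related, and the symbol $L$ in the sharpness clause is not subscripted. A rigorous proof therefore requires committing to one concrete model—such as viewing the validation sample as inducing a random displacement of the empirical risk whose magnitude is controlled by the distributional gap between train and val sets, and whose support shrinks into $\mathcal{N}(\epsilon)$ under standard concentration. Everything after that is essentially bookkeeping: taking an expectation of a pointwise inequality that already appears verbatim in the hypothesis.
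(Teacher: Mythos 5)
Your proposal is correct at the same level of rigor as the paper's own argument, but it takes a genuinely more direct route. Both proofs rest on the same crucial (and heuristic) modeling step: the validation loss at $\theta$ is the training loss at a randomly perturbed parameter $\theta+\epsilon$. The paper makes this explicit by assuming $\delta\sim\mathcal{N}(0,\sigma^2 I)$ and $L_{\text{val}}(\theta)\approx L_0+\tfrac12\delta^\top H(\theta)\delta$; you posit $L_{\text{val}}(\theta)\stackrel{d}{=}L_{\text{train}}(\theta+\epsilon)$ with $\epsilon$ supported in $\mathcal{N}(\epsilon)$. From there the routes diverge. The paper passes through a second-order Taylor expansion, converts the sharpness hypothesis into a Hessian ordering $H(\theta_1)\succeq H(\theta_2)$, and evaluates the Gaussian quadratic form to obtain the explicit gap $\tfrac{\sigma^2}{2}\left(\operatorname{Tr}H(\theta_1)-\operatorname{Tr}H(\theta_2)\right)$. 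You skip the Hessian entirely and simply integrate the pointwise inequality that the sharpness definition already supplies, using linearity and monotonicity of expectation. Your version is more elementary and arguably more faithful to the hypothesis as literally stated (which concerns loss differences on a neighborhood, not curvature), and it sidesteps a small internal tension in the paper's proof: the Gaussian $\delta$ has unbounded support while the sharpness inequality is only assumed in a shrinking neighborhood, whereas you explicitly restrict the perturbation to $\mathcal{N}(\epsilon)$ and flag the non-degeneracy condition needed for strictness. What the paper's route buys in exchange is a quantitative formula for the generalization gap in terms of Hessian traces, which is the standard ``flat minima'' heuristic it wants to invoke. You also correctly identify that the real gap in any proof of this theorem is the unjustified surrogate in step one, which the paper asserts as an assumption rather than proves; neither argument closes that gap.
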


\begin{proof}
The proof proceeds in five steps:

\textbf{Taylor Expansion of Loss Landscape:}  
For a small perturbation $\epsilon \in \mathbb{R}^d$, the loss at $\theta + \epsilon$ is approximated by:
\begin{equation}
    L(\theta + \epsilon) \approx L(\theta) + \frac{1}{2} \epsilon^\top H(\theta) \epsilon
\end{equation}
where $H(\theta) = \nabla^2 L(\theta)$ is the Hessian matrix. At minima ($\nabla L(\theta) = 0$), the first-order term vanishes.

\textbf{Sharpness Characterization:}  
The sharpness condition implies that for all $\epsilon$:
\begin{equation}
    \epsilon^\top H(\theta_1) \epsilon > \epsilon^\top H(\theta_2) \epsilon \quad \Rightarrow \quad H(\theta_1) \succeq H(\theta_2)
\end{equation}
where $\succeq$ denotes the positive semi-definite ordering.

\textbf{Modeling Validation Loss:}  
Assume validation data introduces a Gaussian perturbation $\delta \sim \mathcal{N}(0, \sigma^2 I)$ to the parameters. The validation loss is approximated as:
\begin{equation}
    L_{\text{val}}(\theta) \approx L_0 + \frac{1}{2} \delta^\top H(\theta) \delta
\end{equation}

\textbf{Expectation Calculation:}  
Taking expectation over $\delta$:
\begin{equation}
    \mathbb{E}[L_{\text{val}}(\theta)] = L_0 + \frac{\sigma^2}{2} \text{Tr}(H(\theta))
\end{equation}
where $\text{Tr}(H(\theta))$ is the trace of the Hessian.

\textbf{Inequality Derivation:}  
Since $H(\theta_1) \succeq H(\theta_2)$, it follows that $\text{Tr}(H(\theta_1)) \geq \text{Tr}(H(\theta_2))$. Thus:
\begin{align}
    &\mathbb{E}[L_{\text{val}}(\theta_1)] - \mathbb{E}[L_{\text{val}}(\theta_2)]\\ 
    &= \frac{\sigma^2}{2} \left( \text{Tr}(H(\theta_1)) - \text{Tr}(H(\theta_2)) \right) 
    \nonumber > 0
\end{align}
\end{proof}

\subsection{Equivalent Model}

\begin{definition}[Equivalent Models]
\label{def:equivalent_models}
Let $\mathcal{F}$ denote a neural network architecture subject to parameter space $\Theta$, input space $\mathcal{X}$, and output space $\mathcal{Y}$. Two parameter configurations $\theta_1, \theta_2 \in \Theta$ are called \textbf{equivalent models} if they satisfy:
\begin{enumerate}
    \item \textbf{Architectural Consistency}: $\theta_1$ and $\theta_2$ belong to the same architecture $\mathcal{F}$.
    \item \textbf{Functional Equivalence}: For all inputs $\mathbf{x} \in \mathcal{X}$,
    \begin{equation}
        F(\mathbf{x}; \theta_1) = F(\mathbf{x}; \theta_2)
    \end{equation}
    where $F(\cdot; \theta)$ denotes the function of forward propagation for architecture $\mathcal{F}$ with parameters $\theta$.
    \item \textbf{Parameter Distinction}: $\theta_1 \neq \theta_2$ (i.e., their weight patterns differ).
\end{enumerate}
\end{definition}

Equivalent models represent distinct points in the parameter space $\Theta$ that map to the same functional behavior in the output space $\mathcal{Y}$. This concept enables optimization strategies (e.g., our proposed WISCA) to transition between equivalent models during training, effectively reshaping weight patterns without altering functional outputs.

For example, consider a two-layer fully connected network where:  

1. Layer \( l \) uses \(\mathrm{ReLU}(\cdot)\) activation with zero bias, 

2. Parameters are \((\mathbf{w}^{(l)}, \mathbf{w}^{(l+1)})\) and \((\alpha \mathbf{w}^{(l)}, \alpha^{-1} \mathbf{w}^{(l+1)})\) for \(\alpha > 0\).  

For any input \(\mathbf{x}\), the outputs are:  
\begin{align}
    \theta_1: & \quad \mathbf{w}^{(l+1)} \mathrm{ReLU}\left(\mathbf{w}^{(l)} \mathbf{x}\right) \\
    \theta_2: & \quad \alpha^{-1} \mathbf{w}^{(l+1)} \mathrm{ReLU}\left(\alpha \mathbf{w}^{(l)} \mathbf{x}\right)
\end{align}  
Since \(\mathrm{ReLU}(\alpha \mathbf{z}) = \alpha \mathrm{ReLU}(\mathbf{z})\) for \(\alpha > 0\), \(\theta_2\) simplifies to:  
\[
    \alpha^{-1} \mathbf{w}^{(l+1)} \cdot \alpha \mathrm{ReLU}\left(\mathbf{w}^{(l)} \mathbf{x}\right) = \mathbf{w}^{(l+1)} \mathrm{ReLU}\left(\mathbf{w}^{(l)} \mathbf{x}\right)
\]  
which matches \(\theta_1\) exactly.  

Thus, \(\theta_1\) and \(\theta_2\) are \textbf{equivalent models} (Definition~\ref{def:equivalent_models}):  
\begin{itemize}
    \item Identical architecture and outputs for all inputs
    \item Distinct weight patterns (\(\mathbf{w}^{(l)} \neq \alpha \mathbf{w}^{(l)}\))
\end{itemize}

An ideal implementation of the equivalent model strategy for optimizing neural network training trajectories consists of the following steps:


\begin{figure}[t]
\centering
\includegraphics[width=\columnwidth]{./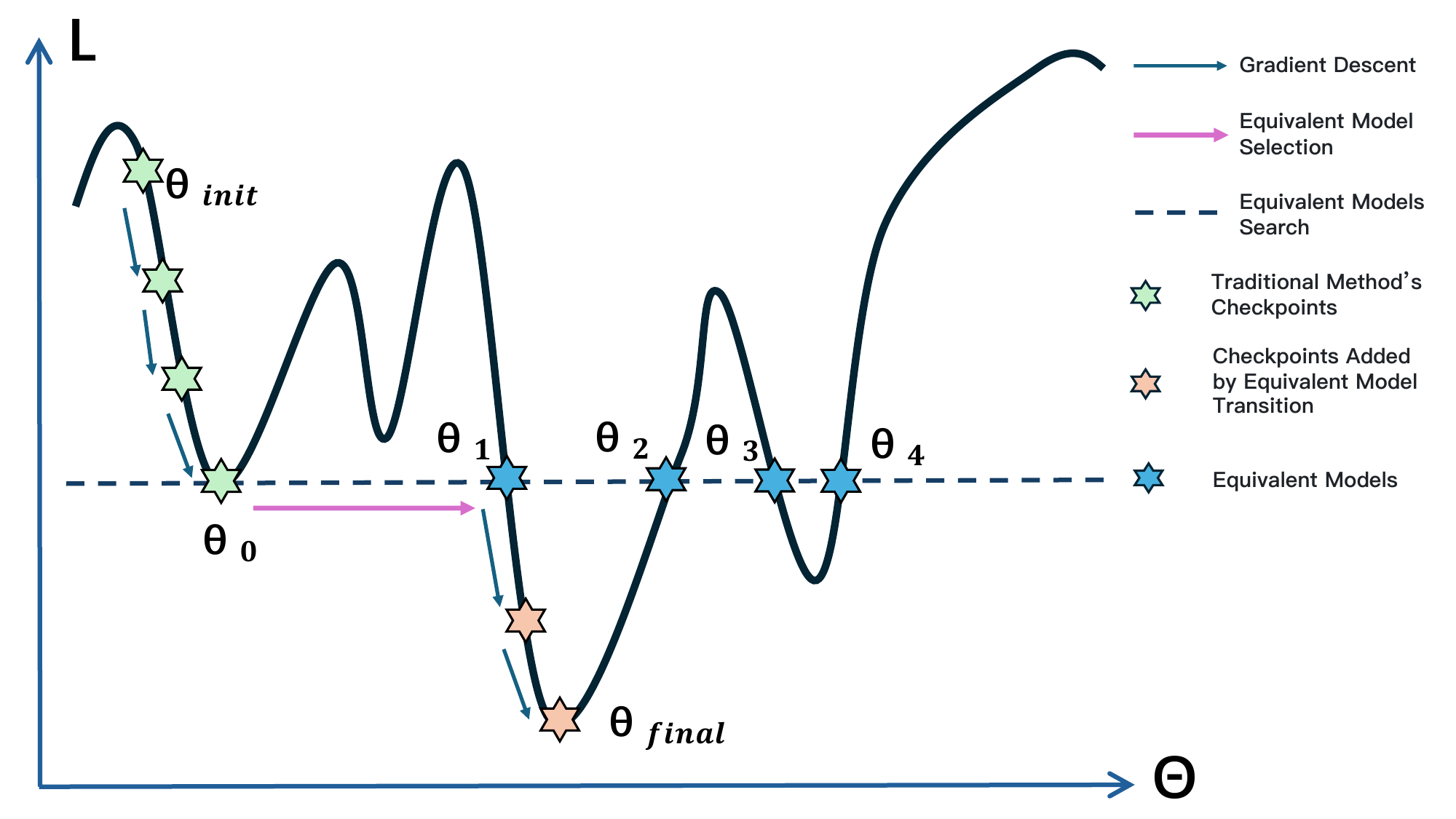}
\caption{\textbf{Optimization via Equivalent Model Transition.} 
In the ideal case, the global optimum $\theta_{\text{final}}$ can be achieved through this strategy. 
Key notations: 
$\theta_{\text{init}}$ (initialized model), 
$\theta_0$ (final checkpoint under conventional optimization), 
$\theta_1, \dots, \theta_n$ (all equivalent models explored globally), 
$\theta_{\text{final}}$ (global optimum). 
After identifying equivalent models, the optimal candidate (e.g., $\theta_1$) is selected for continued training based on convergence potential. 
Theoretically, convergence to $\theta_{\text{final}}$ is guaranteed if all equivalent models are evaluable.}
\label{fig:placeholder}
\vskip -0.15in
\end{figure}


\textbf{Escape from Local Minima:} When a first-order optimizer (e.g., SGD, Adam) becomes trapped in a sharp local minimum $\theta_0$ with poor generalization, systematically explore the set of equivalent models $[\theta_1, \theta_2, ..., \theta_n]$ (as defined in Definition \ref{def:equivalent_models}).

\textbf{Model Selection:} From the candidate set $[\theta_1, \theta_2, ..., \theta_n]$, select a model $\theta_k$ that lies in a "flatter" region of the loss landscape, since such regions are empirically linked to better generalization.

\textbf{Training Transition:} Resume training from $\theta_k$ instead of $\theta_0$, leveraging the functional equivalence of the models to maintain task performance while improving optimization dynamics.


Theoretically, if one can identify all equivalent model strategies for a neural network architecture, the model can converge to the global optimum. However, it is quite challenging to implement in practice. We propose a strategy named WISCA, model transition during training that identifies partial equivalent models, which have demonstrated outstanding performance in extensive experiments.

\section{WISCA}
\label{def:wisca}

\label{sec:wisca}
For an input sequence $\mathbf{X} \in \mathbb{R}^{n \times d_{\text{model}}}$ with $n$ tokens and $d_{\text{model}}$-dimensional embeddings, the self-attention operation is formulated as follows:

\begin{align}
    \left( \mathbf{Q}, \mathbf{K}, \mathbf{V} \right) &= \mathbf{X} \left( \mathbf{W}_q, \mathbf{W}_k, \mathbf{W}_v \right)
\end{align} 
\begin{align}
    \text{Att}(\mathbf{Q}, \mathbf{K}, \mathbf{V}) &= \text{softmax}\left( \frac{\mathbf{Q}\mathbf{K}^\top}{\sqrt{d_k}} \right)\mathbf{V}
\end{align} 
\begin{align}
    \text{Output} &= \text{Att}(\mathbf{Q}, \mathbf{K}, \mathbf{V}) \mathbf{W}_o
\end{align}





For the \(\theta\)-component (\(\theta = \{W_q, W_k\}\)), we model its loss landscape as:  
\begin{equation}
    \mathcal{L}(\mathbf{Q}, \mathbf{K}) = (\mathbf{Q}\mathbf{K} - C)^2, \quad C \in \mathbb{R}^+.
\end{equation}  
where, \(C\) represents the ideal value of \(\mathbf{Q}\mathbf{K}\) (e.g., \(C = 1\) for normalized attention).

Under the assumption of perfect equivalent model strategies, parameters can traverse \emph{any point} on the contour lines of \(\mathcal{L}(\mathbf{Q}, \mathbf{K}) = \text{const}\) (Figure~\ref{fig:wisca-sgd}). On the contour lines of the loss landscape, the region where $\mathbf{Q}=\mathbf{K}$ exhibits the flattest geometry. This is reflected in the top-down contour lines, where the distance between adjacent contour lines reaches the maximum when $\mathbf{Q}=\mathbf{K}$.


We assume that flatter regions of the loss landscape are of higher quality than sharper, more rugged regions at the same loss level. This superiority manifests not only in generalization performance but also in the efficiency of continued training. The core idea of WISCA is to adjust the $\mathbf{Q}$ and $\mathbf{K}$ values by scaling in Transformer-based architectures to satisfy $\mathbf{Q}=\mathbf{K}$ while preserving the $\mathbf{QK}^T$ product.

As illustrated in Figure~\ref{fig:wisca-sgd}, WISCA modifies the initialization by enforcing $\mathbf{Q}=\mathbf{K}$, leading to fundamentally different optimization dynamics compared to random initialization. SGD-M-WISCA exhibits smoother and more stable convergence, avoiding sharp minima with WISCA-adjusted initialization.

\begin{figure}[ht]
\begin{center}
\centerline{\includegraphics[width=\columnwidth]{./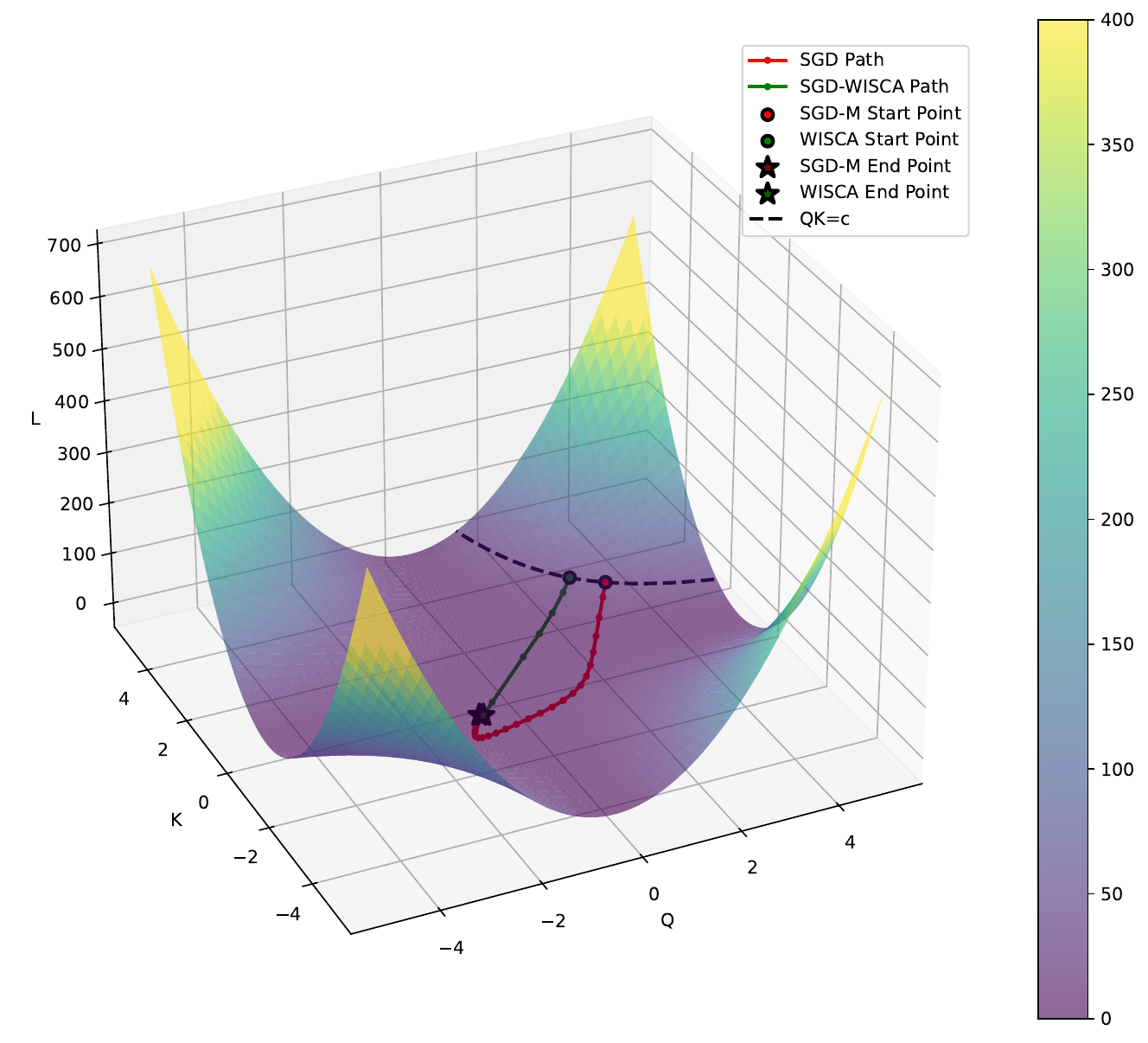}}
\caption{
\textbf{Comparison of SGD-M~\citet{bottou2007tradeoffs} and SGD-M-WISCA Optimization Paths.} 
\textbf{Red}: Optimization trajectory of SGD with momentum (\(\beta=0.9\)) starting from a random initialization. \textbf{Green}: Trajectory using WISCA-adjusted initialization (\(Q=K\)). Both methods minimize \(L = (QK - 1)^2\) with learning rate \(\eta=0.01\) and loss threshold \(\epsilon=10^{-2}\). 
\textbf{Observations}:  
1. The WISCA path (\textbf{Green}) converges in \textbf{7 iterations} with a smooth, flat trajectory, while the random initialization (\textbf{Red}) requires \textbf{25 iterations} with oscillatory behavior.  
2. All equivalent models on the same contour line (\(QK=C\)) exhibit slower convergence except at \(Q=K\), where the flattest region enables faster stabilization.  
This demonstrates WISCA’s ability to identify high-quality initializations aligned with flat minima, accelerating convergence without architectural changes.
}
\label{fig:wisca-sgd}
\end{center}
\vskip -0.4in
\end{figure}

WISCA can be applied not only during initialization but also at any iteration of the training process. Since the WISCA-adjusted model remains functionally equivalent to the original model (Definition~\ref{def:equivalent_models}), training can seamlessly resume from the transformed parameters.


Next, we will introduce the proposed WISCA in detail. The design of WISCA starts with vanilla self-attention in Transformer. To ensure similarity between $\mathbf{Q}$ and $\mathbf{K}$ while preserving the $\mathbf{QK}^T$ product, we devise the following adjustment for $W_q$ and $W_k$ in WISCA:  
\begin{align}
    W_q' &= W_q \cdot \sqrt{\frac{\|W_k\|_1}{\|W_q\|_1}} \label{eq:wq_adjust} \\
    W_k' &= W_k \cdot \sqrt{\frac{\|W_q\|_1}{\|W_k\|_1}} \label{eq:wk_adjust}
\end{align}  
where \( \| \cdot \|_1 \) denotes the \( L_1 \)-norm of a matrix (sum of absolute values of its elements).

Why does the convergence become stable after using WISCA? Here is an intuitive explanation. Considering the landscape $f(Q,K)=(QK-1)^2$ and the current checkpoint is $(Q,K)$, the gradient is $2(QK-1)(K,Q)$. After updating the parameter along the negative gradient direction with learning rate $\eta=\frac{\epsilon}{2(QK-1)}$, the next checkpoint is $(Q-\epsilon K, K-\epsilon Q)$. Now the gradient at this checkpoint is $2((Q-\epsilon K)(K-\epsilon Q)-1)(K-\epsilon Q, Q-\epsilon K)$. We hope the gradient direction varies as small as possible. Therefore, we have 
$
\frac{Q}{K}=\frac{Q-\epsilon K}{K-\epsilon Q}
$
and obtain $K^2=Q^2$, which means $|Q|=|K|$.

The proposed scaling strategy is also valid for consecutive linear layers, even when activation functions (e.g., ReLU, LeakyReLU~\citet{xu2015empirical}) are used. Similarly, it is also valid for $w_v$ and $w_o$ in transformer.

To ensure \( \|W_v'\|_1 = \|W_o'\|_1 \) while preserving the output \( \text{Output} = (attention\_score\cdot V)\cdot W_o \), we define:  
\begin{align}
    W_v' &= W_v \cdot \sqrt{\frac{\|W_o\|_1}{\|W_v\|_1}} \label{eq:wv_adjust} \\
    W_o' &= W_o \cdot \sqrt{\frac{\|W_v\|_1}{\|W_o\|_1}} \label{eq:wo_adjust}
\end{align}

In classical Transformer architectures, \( W_q \) and \( W_k \) typically share the same dimensionality. Under Gaussian initialization, their \( L_1 \)/\( L_2 \)-norms become approximately equal as the parameter count grows, rendering naive WISCA adjustments negligible (Appendix Theorem~\ref{Norm_Convergence}).  


Another essential application of WISCA is for the GQA-based architectures~\citet{ainslie2023gqa} (e.g., \( g \) query heads sharing one key/value group), \( W_q \) has \( g\times \) more parameters than \( W_k \), leading to a significant WISCA scaling ratio of \( \sqrt{1/g} \). For GQA, we propose \textbf{group-averaged normalization}:  

\begin{itemize}
    \item \textbf{Loss Flatness Principle}: For \( \mathcal{L} = (\mathbf{QK} - 1)^2 \), flatness is maximized when $\mathbf{Q}=\mathbf{K}$.  
    \item \textbf{GQA Simulation}:  
    \begin{align*}
        \textbf{Q} &= [Q_1, Q_2, \dots, Q_g], \\ 
        \quad \textbf{K} &= [K_1, K_1, \dots, K_1], \\
        \mathcal{L} &= \left[ K_1 \textstyle \sum_{i=1}^g Q_i - 1 \right]^2
    \end{align*}
    Optimal flatness occurs when \( K_1 = \sum_{i=1}^g Q_i \), not \( K_1 = Q_1 = Q_2 = \dots = Q_g \).  
\end{itemize}

\textbf{Conclusion}: WISCA adjustments for GQA enforce \( \|W_q\|_1 = \|W_k\|_1 \times g \), yielding \textbf{larger scaling effects} than in MHA. Similarly, WISCA can be extended to matrix multiplication with unequal matrices, such as the A and B matrices of LoRA.


We have demonstrated both theoretically and empirically that WISCA achieves its goals through core principles in Transformer architecture: Adjust \( \|W_q\|_1 = \|W_k\|_1 \) and \( \|W_v\|_1 = \|W_o\|_1 \), while preserving model outputs. The impact of WISCA is more pronounced in architectures with imbalanced parameter groups, such as Grouped Query Attention (GQA), where scaling (e.g., \( \sqrt{1/g} \)) leads to significant tuning.

In experiments, we implement two WISCA variants:  
\begin{itemize}
    \item \textbf{Tensor-wise WISCA}: Global scaling applied to entire weight matrices (Equations \ref{eq:wq_adjust}--\ref{eq:wo_adjust}).  
    \item \textbf{Channel-wise WISCA}: A finer-grained variant that applies scaling at the channel level, enhancing flexibility in parameter optimization.  
\end{itemize}

\section{Experiments}

We designed tensor-wise and channel-wise WISCA for $W_q$/$W_k$ and $W_v$/$W_o$, and conducted pre-training comparative experiments on open-source Llama, Qwen, and popular Mixture-of-Experts (MoE) architectures~\cite{jacobs1991adaptive}. The experiments evaluated the training convergence and downstream task performance of different WISCA strategies compared to the original approach. Results demonstrate that WISCA significantly improves model training performance across all tested architectures.

\subsection{Tensor-wise and Channel-wise WISCA}

We implement four WISCA variants targeting different components of the attention mechanism: (1) adjustments on $W_q$/$W_k$ for attention score computation, covering tensor-wise (Figure~\ref{fig:wisca-qk-mha}) and channel-wise (Figure~\ref{fig:wisca-qk-gqa}) optimizations; and (2) adjustments on $W_v$/$W_o$ for output projection, also including tensor-wise (Figure~\ref{fig:wisca-vo-mha}) and channel-wise (Figure~\ref{fig:wisca-vo-gqa}) variants.

\begin{figure}[!htbp]
\begin{center}
\centerline{\includegraphics[width=0.9\columnwidth]{./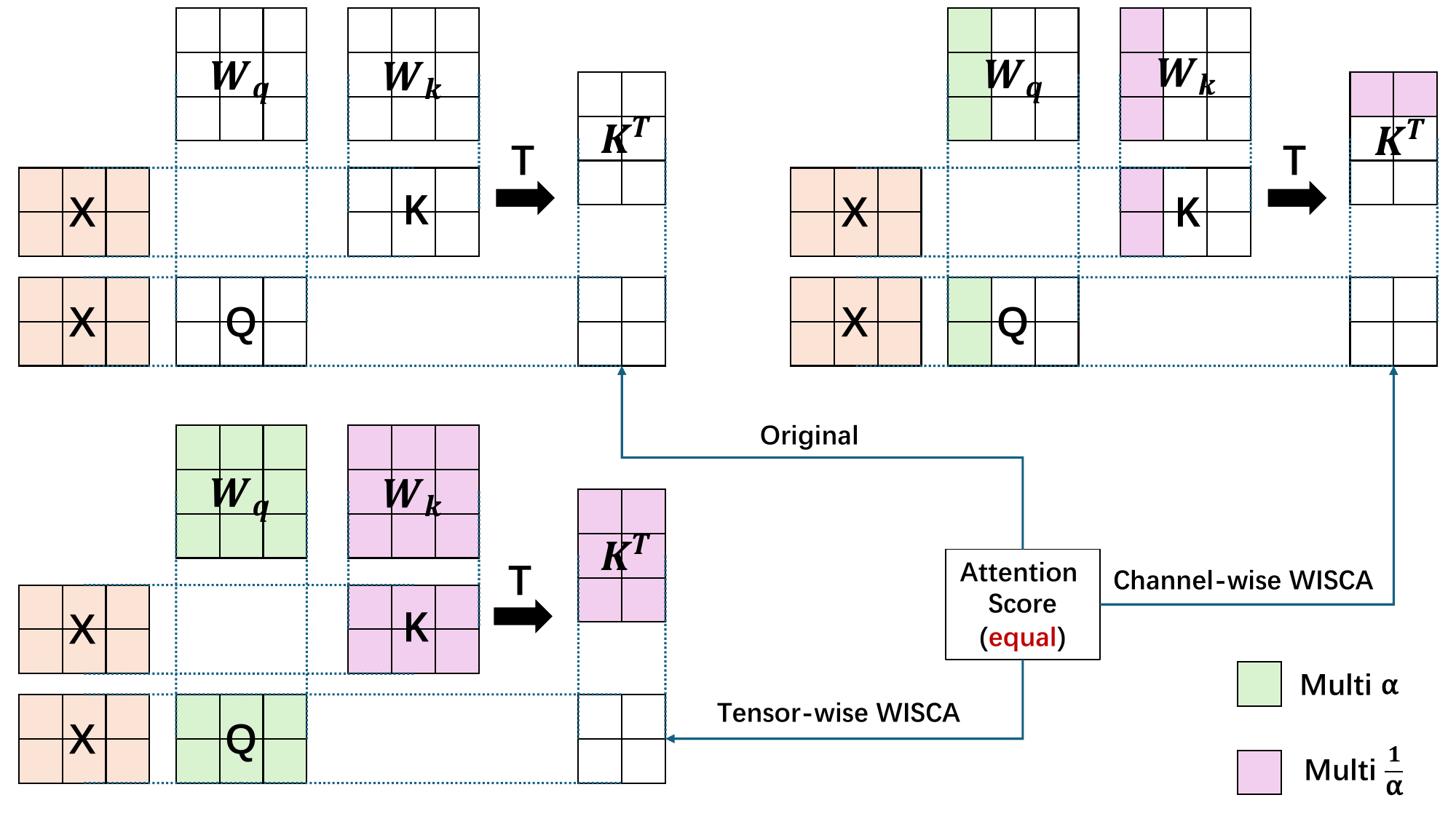}}
\caption{
\textbf{Comparison of Attention Score Computation Methods.} 
Original attention mechanism (top-left), tensor-wise WISCA (bottom-down), and channel-wise WISCA (top-right) on $W_q$/$W_k$ adjustments. All methods produce \textbf{identical attention scores} but exhibit \textbf{distinct weight patterns}. 
}
\label{fig:wisca-qk-mha}
\end{center}
\vskip -0.3in
\end{figure}

As shown in Figure~\ref{fig:wisca-qk-mha}, both QK-WISCA variants retain the original attention scores while redistributing weight magnitudes, with channel-wise scaling offering finer-grained control than tensor-wise. Similarly, Figure~\ref{fig:wisca-vo-mha} shows that $W_v$/$W_o$ adjustments preserve self-attention outputs despite altered weight patterns, mirroring the pattern observed in attention-score optimization.  

For Grouped Query Attention (GQA) architectures, we further validate the adaptability of channel-wise WISCA. Figures \ref{fig:wisca-qk-gqa} and \ref{fig:wisca-vo-gqa} demonstrate that GQA's dimensional asymmetry requires group-aware scaling in channel-wise WISCA.

\begin{figure}[!htbp]
\begin{center}
\centerline{\includegraphics[width=0.9\columnwidth]{./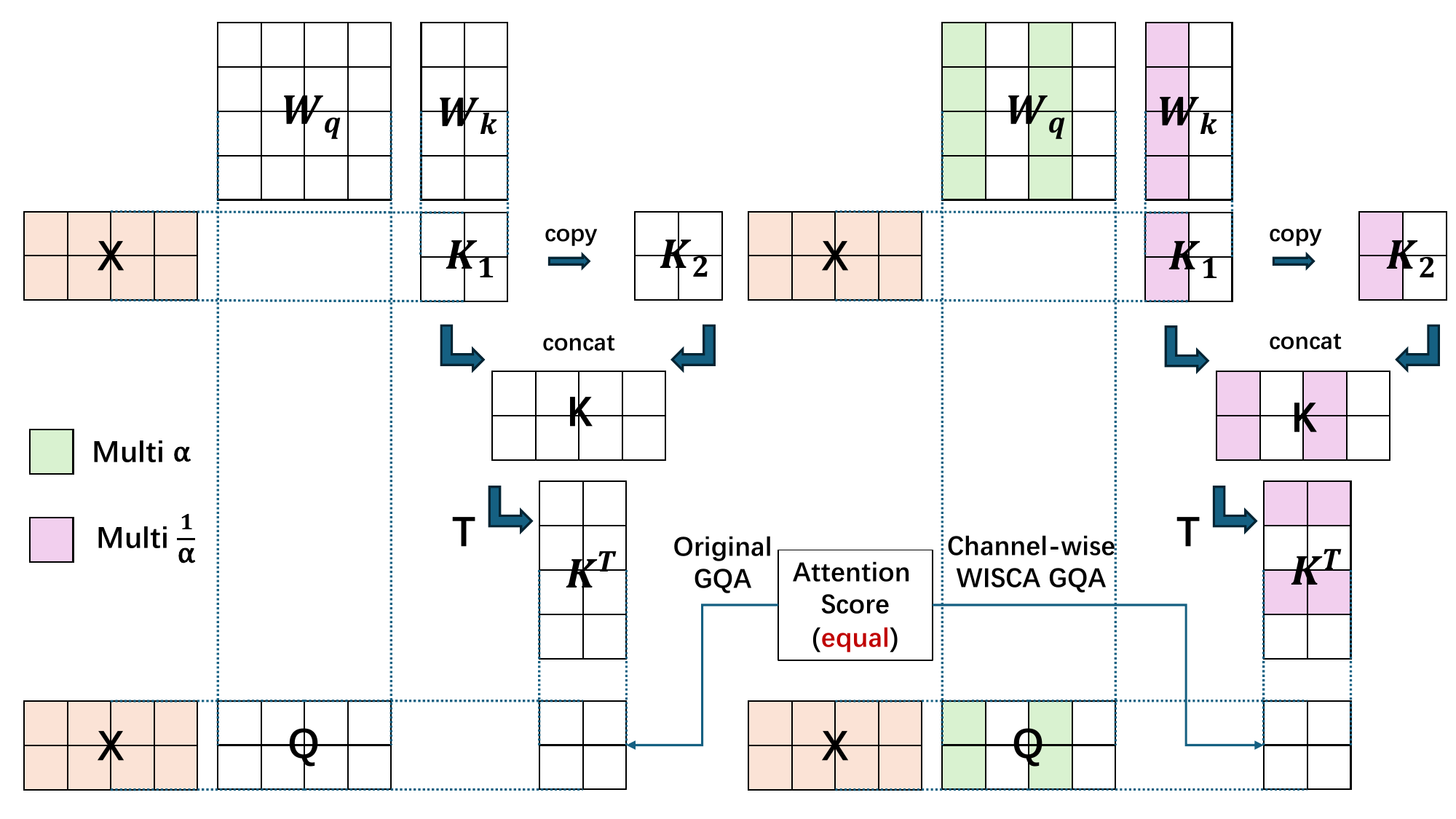}}
\caption{
\textbf{Channel-wise WISCA in GQA Architecture.} 
Comparison between original attention (left) and channel-wise WISCA (right) for $W_q$/$W_k$ adjustments in Grouped Query Attention (GQA). Both methods produce \textbf{identical attention scores} but exhibit \textbf{distinct weight patterns}. GQA introduces dimensional asymmetry between $W_q$ and $W_k$, requiring channel-wise WISCA to adapt to group-wise scaling (tensor-wise WISCA remains identical to MHA, omitted here).
}
\label{fig:wisca-qk-gqa}
\end{center}
\vskip -0.3in
\end{figure}

\subsection{Convergence Effect on Mainstream LLMs}


To validate WISCA's effectiveness in real training scenarios, we conduct pre-training experiments on three architectures (\textbf{TinyLlama}~\cite{zhang2024tinyllama}, \textbf{Qwen2-1.5B}~\cite{team2024qwen2}, and \textbf{Qwen1.5-MoE}~\cite{qwen_moe}) using the TinyStories~\cite{eldan2023tinystories} dataset. We compare tensor-wise WISCA strategies (\ding{172} QK\_TEN and \ding{173} VO\_TEN shown in Figure \ref{fig:wisca-qk-mha} and Figure \ref{fig:wisca-vo-mha}) against the original training approach, with WISCA applied every 250 iterations while maintaining identical hyperparameters (learning rate, optimizer, batch size, etc.) across all experimental groups. Training loss and test perplexity (PPL) results are summarized in Table \ref{tab:model_comparison_pretrain}.

\begin{table}[h]
\centering
\caption{Training losses and test PPL scores for different strategies on various models}
\begin{center}
\begin{small}
\begin{sc}
\begin{tabular}{cccc}
\toprule
Model & Strategy & Train Loss & Test PPL \\
\midrule
\multirow{4}{*}{TinyLlama} & origin & 1.3193 & 3.78 \\
 & \ding{172} QK\_TEN & 1.2849 & 3.66 \\
 & \ding{173} VO\_TEN & 1.3123 & 3.75 \\
 & \ding{172}+\ding{173} & \textbf{1.2749} & \textbf{3.62} \\
\midrule
\multirow{4}{*}{Qwen2-1.5B} & origin & 1.355 & 3.96 \\
 & \ding{172} QK\_TEN & 1.3417 & 3.91 \\
 & \ding{173} VO\_TEN & 1.3507 & 3.94 \\
 & \ding{172}+\ding{173} & \textbf{1.3336} & \textbf{3.88} \\
\midrule
\multirow{4}{*}{Qwen1.5-MoE} & origin & 1.5497 & 4.76 \\
 & \ding{172} QK\_TEN & 1.519 & 4.62 \\
 & \ding{173} VO\_TEN & 1.5408 & 4.72 \\
 & \ding{172}+\ding{173} & \textbf{1.5141} & \textbf{4.60} \\
\bottomrule
\end{tabular}
\end{sc}
\end{small}
\end{center}
\label{tab:model_comparison_pretrain}
\vskip -0.2in
\end{table}

Across all pre-training tasks, the experimental results demonstrate that WISCA consistently improves model convergence behavior. Notably, the combined application of QK-WISCA and VO-WISCA yields superior performance compared to individual module adjustments, exhibiting a \textbf{synergistic effect} on the target dataset. This improvement pattern remains consistent across all tested architectures.

\subsection{Evaluation Results on LLMs}

To evaluate WISCA's zero-shot generalization capabilities, we trained a 1.1B-parameter Llama model from scratch on 1.4B tokens from Wikipedia.en, followed by comprehensive evaluation using the EleutherAI LM Evaluation Harness~\cite{eval-harness}.

\begin{table}[h]
\caption{Evaluation Metrics of llama-1.1B Trained on 1.4B Tokens from Wikipedia.en. Performance comparison of different attention optimization methods. All metircs seen in table \ref{tab:llama_metrics_all}}
\label{tab:llama_metrics}
\begin{center}
\begin{small}
\begin{sc}
\setlength{\tabcolsep}{3pt}
\begin{tabular}{lccccc}
\toprule
 version & BoolQ↑ & ARC-c↑ & PIQA↑ & WinoG↑ & avg↑ \\
\midrule
 origin & 0.3838 & 0.1741 & 0.5288 & 0.5004 & 0.3968 \\
\midrule
 \ding{172} qk\_ten & 0.3810 & 0.1843 & 0.5332 & 0.4807 & 0.3948 \\
 \ding{173} qk\_row & 0.3887 & 0.1817 & 0.5370 & 0.5091 & 0.4041 \\
 \ding{174} vo\_ten & 0.4015 & 0.1852 & 0.5294 & 0.4957 & 0.4030 \\
 \ding{175} vo\_row & 0.3817 & 0.1749 & 0.5386 & 0.4988 & 0.3985 \\
\midrule
 \ding{172}+\ding{174} & \textbf{0.5214} & 0.1869 & 0.5413 & 0.4980 & \textbf{0.4369} \\
 \ding{173}+\ding{174} & 0.4483 & \textbf{0.2014} & 0.5305 & 0.5059 & 0.4215 \\
 \ding{172}+\ding{175} & 0.4226 & 0.1783 & 0.5310 & 0.5075 & 0.4099 \\
  \ding{173}+\ding{175} & 0.3994 & 0.1724 & \textbf{0.5430} & \textbf{0.5170} & 0.4080 \\
\midrule
 \ding{172}+\ding{174}(init) & 0.4703 & 0.1860 & 0.5299 & 0.4964 & 0.4207 \\
 \ding{173}+\ding{174}(init) & 0.3960 & 0.1792 & 0.5386 & 0.5036 & 0.4044 \\
 \ding{172}+\ding{175}(init) & 0.4312 & 0.1766 & 0.5308 & 0.4949 & 0.4083 \\
 \ding{173}+\ding{175}(init) & 0.3917 & 0.1817 & 0.5337 & 0.5043 & 0.4029 \\
\bottomrule
\end{tabular}
\end{sc}
\end{small}
\end{center}
\vskip -0.2in
\end{table}

In the end-to-end evaluation experiments, we validated both tensor/channel-wise WISCA strategies and their four combinatorial variants. Given the pronounced impact of initial WISCA adjustments on model positioning within the loss landscape, we additionally evaluated strategies where WISCA was applied only at initialization (marked as "(init)" in Table \ref{tab:llama_metrics}).

The evaluation experiments demonstrate that combinatorial WISCA strategies (\ding{172}+\ding{174}, etc.) significantly outperform individual optimizations, with the best combination achieving a 10.1\% average improvement over the baseline (Table \ref{tab:llama_metrics}), highlighting synergistic effects between QK-WISCA and VO-WISCA adjustments.

While initialization-only WISCA retains an average of 97\% of the full combinatorial performance, it suggests that periodic adjustments provide incremental refinements to weight patterns. For resource-constrained scenarios, initialization-only WISCA offers a computationally efficient alternative (no training overhead) while preserving most performance gains.

\begin{table}[h]
\caption{Zero-shot evaluation results on different training steps on llama\_moe-5B-A0.8B.All metircs seen in table \ref{tab:zero-shot-results_all}}
\label{tab:zero-shot-results}
\begin{center}
\begin{small}
\begin{sc}
\setlength{\tabcolsep}{4pt}
\begin{tabular}{ccccccc}
\toprule
Steps & \multicolumn{3}{c}{Metrics AVG↑} & \multicolumn{3}{c}{PPL(wikitext2)} \\
\cmidrule(lr){2-4} \cmidrule(lr){5-7}
(Tokens) & origin & \ding{172}+\ding{174} & +(\%) & origin & \ding{172}+\ding{174} & +(\%) \\
\midrule
1.43B & 35.56 & 37.07 & \textbf{4.25} & 70.93 & 68.11 & \textbf{3.98} \\
2.86B & 37.88 & 38.23 & \textbf{0.92} & 48.17 & 46.25 & \textbf{3.98} \\
4.29B & 39.28 & 39.49 & \textbf{0.53} & 40.72 & 39.17 & \textbf{3.82} \\
5.72B & 39.83 & 40.68 & \textbf{2.13} & 35.48 & 34.85 & \textbf{1.77} \\
7.15B & 40.13 & 40.33 & \textbf{0.50} & 33.29 & 32.77 & \textbf{1.56} \\
8.58B & 40.31 & 40.69 & \textbf{0.94} & 31.16 & 30.80 & \textbf{1.15} \\
10.0B & 39.39 & 40.72 & \textbf{3.38} & 29.45 & 29.21 & \textbf{0.82} \\
\bottomrule
\end{tabular}
\end{sc}
\end{small}
\end{center}
\vskip -0.2in
\end{table}

To analyze WISCA's impact across training stages, we train a Llama-MoE-5B-A0.8B model for 10B tokens, evaluating seven intermediate checkpoints. Zero-shot performance metrics and perplexity scores are reported in Table \ref{tab:zero-shot-results}, demonstrating WISCA's consistent optimization benefits throughout training.

The results demonstrate that, due to the extensive tuning of WISCA during initialization, the most significant improvements are achieved early in training, when the model's loss graph smoothness undergoes the largest shift.

While metric fluctuations occur across training steps (e.g., 0.50–3.38\% accuracy variations), WISCA consistently outperforms the baseline in all evaluations. The sustained positive trends—particularly the average 1.81\% metrics improvement and 2.44\% perplexity reduction across 10B tokens—validate its robustness as a general optimization strategy. 

\subsection{Benefits on LoRA}

\textbf{Low-Rank Adaptation (LoRA)}~\cite{hu2022lora} is a parameter-efficient fine-tuning (PEFT) paradigm widely adopted for large language models (LLMs). The core hypothesis posits that weight updates (\(\Delta W\)) during model adaptation exhibit a low \emph{intrinsic rank}, implying they can be approximated by low-dimensional structures rather than full-rank matrices.

Given a pre-trained weight matrix \( W \in \mathbb{R}^{m \times n} \), LoRA introduces two trainable low-rank matrices \( A \in \mathbb{R}^{m \times r} \) and \( B \in \mathbb{R}^{r \times n} \), where \( r \ll \min(m, n) \). The adapted weight is computed as:
\begin{equation}
    W' = W + \Delta W = W + A \cdot B
\end{equation}
During fine-tuning, \emph{only} \( A \) and \( B \) are updated, while the original \( W \) remains frozen. 



When \( m \neq n \), the parameter counts of matrices \( A \) and \( B \) become imbalanced. In standard LoRA, the \( B \) matrix is initialized as zero (\(\|A\| \neq \|B\|\)), enabling dynamic WISCA operations during training. We fine-tuned the DeepSeek-V2-Lite-Chat model on the Alpaca and MetaMath datasets using LoRA, PISSA~\cite{meng2024pissa}, and RS-LoRA~\cite{kalajdzievski2023rank} strategies via Llama-Factory~\cite{zheng2024llamafactory}, comparing vanilla fine-tuning with WISCA-enhanced training (Table \ref{tab:wisca-loss-lora}). Results validate its universal optimization capability in parameter-efficient fine-tuning (PEFT) scenarios.

The \textbf{Alpaca}~\cite{alpaca} dataset is a widely-used benchmark for instruction tuning, generated by fine-tuning LLaMA on self-instruct-style human-AI interactions to improve language models' ability to follow diverse commands. \textbf{MetaMath}~\cite{yu2023metamath} is a specialized mathematical reasoning dataset containing 8.4k problem-solution pairs across arithmetic, algebra, and geometry, designed to evaluate and enhance models' multistep reasoning capabilities. 

\begin{table}[ht]
\caption{
Training Loss Comparison: Vanilla vs. WISCA enhanced SFT fine-tuning for DeepSeek-V2-Lite-Chat~\cite{deepseekv2} on Alpaca and MetaMath. All experiments were trained with 3 epochs. 
}
\label{tab:wisca-loss-lora}
\begin{small}
\begin{sc}
\setlength{\tabcolsep}{7.5pt}
\centering
\begin{tabular}{lrrrr}
\toprule
\multirow{2}{*}{Method} & \multicolumn{2}{c}{Alpaca} & \multicolumn{2}{c}{MetaMath} \\
\cmidrule(lr){2-3} \cmidrule(lr){4-5}
& Origin & WISCA & Origin & WISCA \\
\midrule
LoRA    & 0.8602 & \textbf{0.8532} & 0.0779 & \textbf{0.0770} \\
PISSA   & 0.6227 & \textbf{0.6176} & 0.0692 & \textbf{0.0688} \\
RS-LoRA & 0.6773 & \textbf{0.6760} & 0.0744 & \textbf{0.0726} \\
\bottomrule
\end{tabular}
\end{sc}
\end{small}
\vskip -0.1in
\end{table}

Since the B matrix in vanilla LoRA is initialized as a zero matrix, WISCA cannot be applied during initialization. In our experiments (Table \ref{tab:wisca-loss-lora}), WISCA operations were performed at one-third of the total training iterations.

\subsection{Benefits on Eagle}



\textbf{EAGLE}~\cite{li2024eagle} is a state-of-the-art speculative sampling framework that significantly accelerates LLMs' inference. It employs a draft model (typically a small neural network) to generate candidate token sequences, which are then validated in parallel by the target LLM.

We implement EAGLE by training a draft model for \textbf{LLaMA 3.1 Instruct 8B}~\cite{grattafiori2024llama}. The draft model architecture consists of a single Transformer layer with self-attention, incorporating our proposed WISCA strategy during training to optimize weight patterns.

\textbf{Dataset}: We use the \textbf{ShareGPT} dataset, a collection of 90K high-quality human-AI conversations spanning diverse domains (e.g., coding, reasoning, open-ended dialogue). Each example contains multi-turn interactions, making it suitable for training instruction-following draft models.

\begin{table}[ht]
\caption{
EAGLE-1 Performance: Average Accepted Tokens and Training Metrics. Tree/Chain modes use depth=6/top-k=10 and depth=5, respectively.
}
\label{tab:eagle-wisca}
\begin{small}
\begin{sc}
\setlength{\tabcolsep}{7pt}
\begin{tabular}{lrrrr}
\toprule
\multirow{2}{*}{\textbf{Metric}} & \multicolumn{2}{c}{\textbf{Original}} & \multicolumn{2}{c}{\textbf{WISCA}} \\
\cmidrule(lr){2-3} \cmidrule(lr){4-5}
& Tree & Chain & Tree & Chain \\
\midrule
MT-Bench       & 3.001 & 1.528 & \textbf{3.009} & \textbf{1.593} \\
GSM8K          & 3.234 & \textbf{1.818} & \textbf{3.271} & 1.809 \\
HumanEval      & 3.708 & 2.103 & \textbf{3.715} & \textbf{2.110} \\
Alpaca-Eval    & 2.874 & 1.462 & \textbf{2.900} & \textbf{1.478} \\
\midrule
Train Loss     & \multicolumn{2}{c}{0.7168} & \multicolumn{2}{c}{\textbf{0.7113}} \\
Val Loss       & \multicolumn{2}{c}{0.7359} & \multicolumn{2}{c}{\textbf{0.7312}} \\
Train Acc (\%) & \multicolumn{2}{c}{79.28} & \multicolumn{2}{c}{\textbf{79.49}} \\
Val Acc (\%)   & \multicolumn{2}{c}{77.03} & \multicolumn{2}{c}{\textbf{77.11}} \\
\bottomrule
\end{tabular}
\end{sc}
\end{small}
\vskip -0.1in
\end{table}

In experiments with the EAGLE draft model (Table \ref{tab:eagle-wisca}), we applied tensor-wise WISCA to the $[w_q, w_k]$ and $[w_v, w_o]$ modules. The results show that WISCA consistently improves training efficiency, with the WISCA-enhanced model outperforming the baseline in end-to-end evaluations in both \emph{tree} and \emph{chain} generation modes. This validates the generalizability of WISCA’s weight pattern optimization in speculative sampling frameworks.

\section{Related Work}
\textbf{Loss Landscape Optimization.} 
Sharpness-Aware Minimization (SAM)~\cite{foret2020sharpness} explicitly minimizes loss sharpness, albeit at the cost of approximately double the computation per parameter update.
Stochastic Weight Averaging (SWA)~\cite{izmailov2018averaging} improves generalization by weight averaging, but requires extensive training. 
WISCA implicitly smooths the loss landscape through weight pattern adjustments , achieving efficiency gains with minimal overhead.

\textbf{Weight Balancing and Normalization.} 
Query key normalization (QKN)~\cite{henry2020query} improves the stability of transformer training by applying L2 normalization to the query (Q) and key (K) matrices, transforming their dot products into cosine similarities. 
Weight normalization~\cite{salimans2016weight} reparameterizes weights to decouple direction and magnitude. 
Unlike these methods, WISCA enforces \emph{functional equivalence} while balancing weight norms, enabling dynamic transitions between equivalent models without architectural changes.

\textbf{Model Equivalence and Weight Patterns.} The Lottery ticket hypothesis~\cite{frankle2018lottery} identifies performant sparse subnets, whereas permutation symmetry reveals equivalence under weight permutations. 
WISCA leverages these insights to navigate the loss landscape via equivalent model transitions, prioritizing flat minima for enhanced generalization.

\textbf{Parameter-Efficient Fine-Tuning (PEFT).} 
Low-Rank Adaptation (LoRA)~\cite{hu2022lora} reduces trainable parameters by decomposing weight updates into low-rank matrices. 
Recent variants like PiSSA~\cite{meng2024pissa} further improve efficiency by adapting only the principal singular values and vectors of the weight matrices, achieving faster convergence than LoRA.
WISCA can be integrated with PEFT methods such as LoRA and PiSSA to optimize weight patterns in their low-rank matrices, enhancing both training stability and task-specific adaptation without introducing additional parameters.

\textbf{Speculative Decoding.} 
EAGLE~\cite{li2024eagle} accelerates LLM inference through a small draft model. 
Our work improves EAGLE by applying WISCA to the draft layer training, improving token acceptance rates through better weight patterns.

\section{Conclusion}

In this work, we introduced WISCA, a training optimization strategy that reshapes weight patterns to enhance model robustness without architectural changes. WISCA improves generalization by achieving flatter minima through equivalent model transitions. Experiments across various architectures, such as GQA, MoE, and LoRA, demonstrated WISCA's effectiveness in reducing training perplexity, enhancing zero-shot performance, and increasing inference efficiency. This approach suggests potential for broader application and future exploration of equivalent model strategies across different neural networks.

\section{Limitations}
Despite the promising results, WISCA has several limitations that warrant further investigation:

Architectural Constraints: While WISCA effectively optimizes weight patterns in Transformer-based architectures, its applicability to other neural network structures, such as convolutional or recurrent networks, remains unexplored. Further research is needed to adapt WISCA to diverse architectures.

Experimental Scope: The experiments conducted focus primarily on specific datasets and model configurations. The generalizability of WISCA across different data domains and larger-scale models requires comprehensive evaluation.

\nocite{zheng2023judging}
\nocite{chen2021evaluating}
\nocite{cobbe2021training}
\nocite{clark2019boolq}
\nocite{allenai:arc}
\nocite{Bisk2020}
\nocite{thrush_and_ross2022winoground}
\nocite{zellers2019hellaswag}
\nocite{OpenBookQA2018}
\nocite{huang2023ceval}
\nocite{vaswani2017attention}
\nocite{shazeer2019fast}
\nocite{leviathan2023fast}
\nocite{huo2025c2t}
\nocite{qin2025maskprune}
\nocite{xu2023parameter}
\nocite{keskar2016large}
\nocite{touvron2023llama}
\nocite{merity2016pointer}
\nocite{team2025longcat}

\bibliography{custom}

\appendix

\label{sec:appendix}

\section{Norm Convergence in Classical Transformers}

\begin{theorem} 
\label{Norm_Convergence}
For Gaussian-initialized matrices \( \mathbf{W}_q, \mathbf{W}_k \in \mathbb{R}^{m \times n} \) with i.i.d. entries \( \mathcal{N}(0, \sigma^2) \), the \( L_1 \)- and \( L_2 \)-norms satisfy:  
\[
\frac{\|\mathbf{W}_q\|_1}{\|\mathbf{W}_k\|_1} \xrightarrow{p} 1 \quad \text{and} \quad \frac{\|\mathbf{W}_q\|_2}{\|\mathbf{W}_k\|_2} \xrightarrow{p} 1 \quad \text{as } mn \to \infty,
\]  
where \( \xrightarrow{p} \) denotes convergence in probability.  
\end{theorem}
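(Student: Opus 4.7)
The plan is to prove both claims by direct application of the weak law of large numbers (WLLN) to the sums defining the norms, followed by Slutsky's theorem (or the continuous mapping theorem) to handle the ratio.

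First I would unpack the definitions. Under the paper's convention, $\|\mathbf{W}\|_1 = \sum_{i,j} |W_{ij}|$ is a sum of $mn$ i.i.d.\ copies of $|X|$ with $X \sim \mathcal{N}(0,\sigma^2)$, whose mean is the half-normal mean $\mu_1 := \sigma\sqrt{2/\pi}$. Similarly $\|\mathbf{W}\|_2^2 = \sum_{i,j} W_{ij}^2$ is a sum of $mn$ i.i.d.\ copies of $X^2$, with mean $\mu_2 := \sigma^2$. Both $|X|$ and $X^2$ have finite mean (in fact finite variance), so Khintchine's WLLN applies.

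Next I would apply WLLN twice, independently, to obtain
\begin{equation}
\frac{1}{mn}\|\mathbf{W}_q\|_1 \xrightarrow{p} \mu_1, \qquad \frac{1}{mn}\|\mathbf{W}_k\|_1 \xrightarrow{p} \mu_1,
\end{equation}
and likewise $\tfrac{1}{mn}\|\mathbf{W}_q\|_2^2 \xrightarrow{p} \mu_2$ and $\tfrac{1}{mn}\|\mathbf{W}_k\|_2^2 \xrightarrow{p} \mu_2$ as $mn \to \infty$. For the $L_2$ case I would then take square roots using the continuous mapping theorem with $x \mapsto \sqrt{x}$ (continuous on $[0,\infty)$) to get $\tfrac{1}{\sqrt{mn}}\|\mathbf{W}_q\|_2 \xrightarrow{p} \sqrt{\mu_2}$, and similarly for $\mathbf{W}_k$.

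Finally, I would form the ratios and invoke Slutsky's theorem: since numerator and denominator both converge in probability to the same positive constant ($\mu_1 > 0$ in the $L_1$ case, $\sqrt{\mu_2} > 0$ in the $L_2$ case), the $1/(mn)$ and $1/\sqrt{mn}$ scalings cancel, and the map $(a,b)\mapsto a/b$ is continuous at $(\mu,\mu)$ with $\mu>0$, yielding
\begin{equation}
\frac{\|\mathbf{W}_q\|_1}{\|\mathbf{W}_k\|_1} \xrightarrow{p} 1, \qquad \frac{\|\mathbf{W}_q\|_2}{\|\mathbf{W}_k\|_2} \xrightarrow{p} 1.
\end{equation}
There is no serious obstacle; the only mild care needed is to ensure the denominator is bounded away from zero with probability tending to one, which is immediate because it converges in probability to a strictly positive constant. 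Independence of $\mathbf{W}_q$ and $\mathbf{W}_k$ is convenient but not essential — marginal convergence of numerator and denominator to the same positive limit is enough for Slutsky to deliver the ratio limit.
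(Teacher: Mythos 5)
Your proof is correct and follows essentially the same route as the paper: both arguments show that each norm, suitably normalized by $mn$, concentrates at its mean (the paper via an explicit Chebyshev bound, you via the WLLN, which for these finite-variance summands is the same mechanism), and then conclude the ratio tends to $1$. Your write-up is slightly more careful at the final step, making the square-root and ratio limits explicit via the continuous mapping and Slutsky theorems, which the paper leaves implicit.
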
  
\begin{proof}  
1. \textbf{\( L_2 \)-Norm (Frobenius Norm)}:  
   Let \( \|\mathbf{W}\|_2^2 = \sum_{i,j} W_{ij}^2 \). For Gaussian entries:  
   \[
   \mathbb{E}[\|\mathbf{W}\|_2^2] = mn\sigma^2, \quad \text{Var}(\|\mathbf{W}\|_2^2) = 2mn\sigma^4.
   \]  
   By Chebyshev’s inequality:  
   \[
   P\left(\left|\frac{\|\mathbf{W}\|_2^2}{mn\sigma^2} - 1\right| > \epsilon\right) \leq \frac{2mn\sigma^4}{(mn\sigma^2\epsilon)^2} = \frac{2}{\epsilon^2 mn} \to 0.
   \]  
   Thus, \( \|\mathbf{W}_q\|_2 / \|\mathbf{W}_k\|_2 \to 1 \).  

2. \textbf{\( L_1 \)-Norm}:  
   Let \( \|\mathbf{W}\|_1 = \sum_{i,j} |W_{ij}| \). For \( W_{ij} \sim \mathcal{N}(0, \sigma^2) \):  
   \[
   \mathbb{E}[|W_{ij}|] = \sigma\sqrt{\frac{2}{\pi}}, \quad \text{Var}(|W_{ij}|) = \sigma^2\left(1 - \frac{2}{\pi}\right).
   \]  
   Summing over \( mn \) terms:  
\begin{align*}
   \mathbb{E}[\|\mathbf{W}\|_1] &= mn\sigma\sqrt{\frac{2}{\pi}}, \\
   \text{Var}(\|\mathbf{W}\|_1) &= mn\sigma^2\left(1 - \frac{2}{\pi}\right).
\end{align*}
   Again by Chebyshev:  
   \[
   P\left(\left|\frac{\|\mathbf{W}\|_1}{mn\sigma\sqrt{2/\pi}} - 1\right| > \epsilon\right) \leq \frac{1 - 2/\pi}{\epsilon^2 mn} \to 0.
   \]  
   Hence, \( \|\mathbf{W}_q\|_1 / \|\mathbf{W}_k\|_1 \to 1 \).  
\end{proof}

\section{Appendix figures}
In this Appendix, we present the WISCA strategy for the \texttt{wv} and \texttt{wo} components of the transformer block, which were not exhibited in the main text. Figure \ref{fig:wisca-vo-mha} illustrates the matrix perspective of tensor-wise WISCA and channel-wise WISCA for a general attention module. Similarly, Figure \ref{fig:wisca-vo-gqa} provides a schematic representation for the attention module with GQA.

\begin{figure}[!htbp]
\begin{center}
\centerline{\includegraphics[width=0.9\columnwidth]{./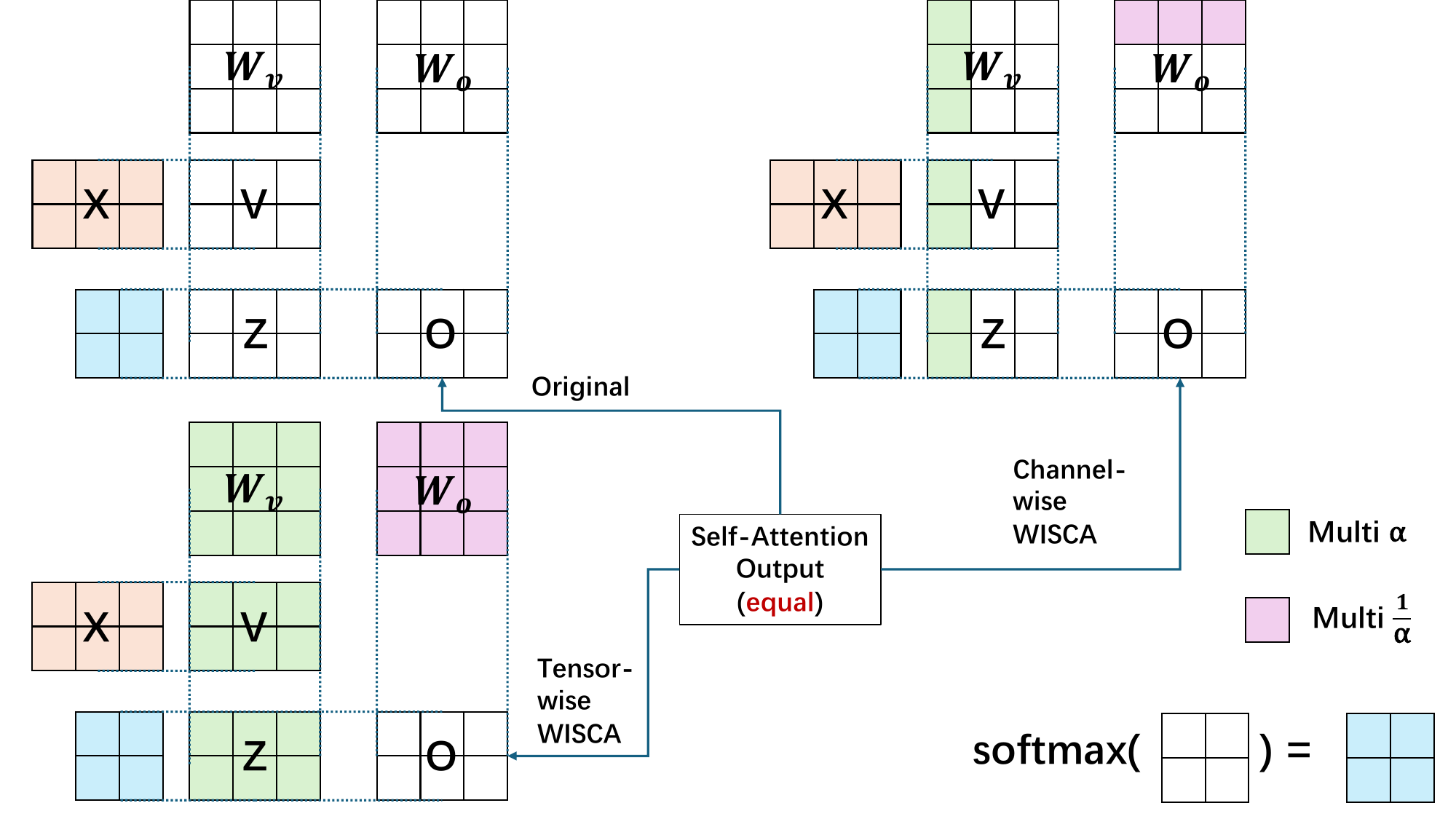}}
\caption{
\textbf{Comparison of Output Projection Methods.} 
Original output projection (left), tensor-wise WISCA (middle), and channel-wise WISCA (right) on $W_v$/$W_o$ adjustments. All methods produce \textbf{identical self-attention outputs} but exhibit \textbf{distinct weight patterns}.
}
\label{fig:wisca-vo-mha}
\end{center}
\vskip -0.3in
\end{figure}

\begin{figure}[!htbp]
\begin{center}
\centerline{\includegraphics[width=0.9\columnwidth]{./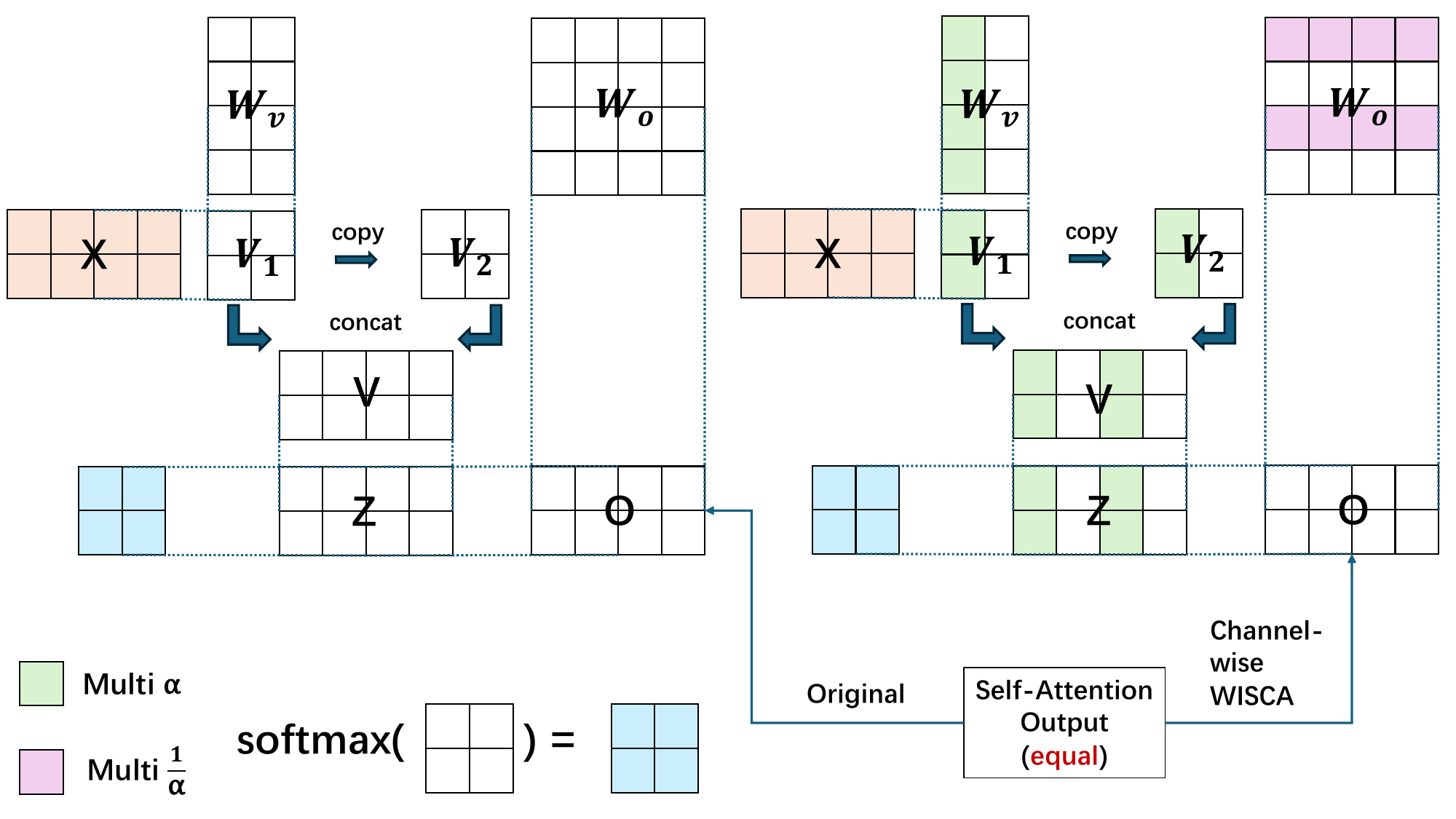}}
\caption{
\textbf{Channel-wise WISCA for Output Projection in GQA.} 
Comparison between original output projection (left) and channel-wise WISCA (right) on $W_V$/$W_O$ in Grouped Query Attention (GQA). Both methods yield \textbf{identical self-attention outputs} but with \textbf{distinct weight patterns}. GQA's dimensional asymmetry requires channel-wise WISCA to adapt group-aware scaling, while tensor-wise WISCA remains equivalent to MHA (omitted here).
}
\label{fig:wisca-vo-gqa}
\end{center}
\vskip -0.4in
\end{figure}

\section{Summary of All Metrics}
In the experimental section, we conducted pre-training tasks using the tinyllama and llama-moe models. While the main text includes a streamlined table focusing on zero-shot evaluation metrics, this appendix provides a complete overview of all evaluation metrics for thorough analysis.

The following tables provide detailed metrics gathered from our comprehensive evaluations. Table \ref{tab:llama_metrics_all} breaks down the individual performance metrics across various tests, highlighting the comparative effectiveness of different attention optimization strategies applied to the Llama model. We additionally explore the impact of WISCA at various stages of training in Table \ref{tab:zero-shot-results_all}, showcasing the zero-shot evaluation results for the llama\_moe-5B-A0.8B model across different checkpoints and training steps.

These results underscore the efficacy of our approaches. By presenting all metrics here, we aim to furnish a deeper understanding of the evaluation outcomes in the context of our experimental framework.

\subsection{Proof of Convergence}
Why does the convergence become stable after using WISCA? Here is an intuitive explanation. Considering the landscape $f(Q,K)=(QK-1)^2$ and the current checkpoint is $(Q,K)$, the gradient is $2(QK-1)(K,Q)$. After updating the parameter along the negative gradient direction with learning rate $\eta=\frac{\epsilon}{2(QK-1)}$, the next checkpoint is $(Q-\epsilon K, K-\epsilon Q)$. Now the gradient at this checkpoint is $2((Q-\epsilon K)(K-\epsilon Q)-1)(K-\epsilon Q, Q-\epsilon K)$. We hope the gradient direction varies as small as possible. Therefore, we have 
$
\frac{Q}{K}=\frac{Q-\epsilon K}{K-\epsilon Q}
$
and obtain $K^2=Q^2$, which means $|Q|=|K|$.

Here are the details for above. Consider the minimal analytic example:
$$
L(Q,K)=\tfrac12(QK-C)^2,\qquad C>0.
$$

a) Hessian and sharpness
The Hessian and its trace are

$$
\mathbf{H} = \begin{bmatrix}
K^2 & QK \\
QK & Q^2
\end{bmatrix}, \qquad
\operatorname{Tr}(\mathbf H) = Q^2 + K^2.
$$

On the contour $QK=C$, $\operatorname{Tr}(\mathbf H)$ is minimized when $|Q|=|K|=\sqrt{C}$, giving the flattest region.

b) One-step SGD stability
Update rules:
$$
Q_1 = Q_0 - \eta\, (Q_0 K_0 - C)\, K_0
$$
$$
K_1 = K_0 - \eta\, (Q_0 K_0 - C)\, Q_0
$$
Let $\mathbf g^{(1)}=[K_{1};\, Q_{1}]$ be the next gradient direction. A first-order expansion gives
$$
    \cos{\theta}=  1-\eta\, (Q_{0} K_{0}-C)(Q_{0}^{2}-K_{0}^{2})^{2}/||g||^{2}+O(\eta^{2})
$$
Hence $\cos{\theta}=  1$ if and only if $|Q_{0}|= |K_{0}|$, i.e., no directional oscillation.

c) Guaranteed sharpness reduction
After rescaling to $|Q'|= |K'|= \sqrt{C}$,
$$
    \Delta{\operatorname{Tr}}(\mathbf H)= Q_{0}^{2}+ K_{0}^{2}-  2C=(|Q_{0}|- |K_{0}|)^{2}\geqslant 0
$$
which guarantees a deterministic drop of curvature before any gradient step.

\begin{table*}[!h]
\caption{All metrics of table\ref{tab:llama_metrics}}
\label{tab:llama_metrics_all}
\begin{center}
\begin{small}
\begin{sc}
\setlength{\tabcolsep}{2pt}
\begin{tabular}{lccccccccc}
\toprule
 version & BoolQ↑ & ARC-c↑ & ARC-e↑ & PIQA↑ & WinoG↑ & HellaS↑ & OBQA↑ & ceval↑ & Avg↑ \\
\midrule
 origin & 0.3838 & 0.1741 & \textbf{0.3274} & 0.5288 & 0.5004 & 0.2650 & 0.1300 & 0.2437 & 0.3192 \\
 \ding{172} qk\_ten & 0.3810 & 0.1843 & 0.3228 & 0.5332 & 0.4807 & 0.2617 & 0.1280 & 0.2511 & 0.3179 \\
 \ding{173} qk\_row & 0.3887 & 0.1817 & 0.3102 & 0.5370 & 0.5091 & 0.2617 & 0.1280 & 0.2533 & 0.3212 \\
 \ding{174} vo\_ten & 0.4015 & 0.1852 & 0.3081 & 0.5294 & 0.4957 & 0.2653 & \textbf{0.1380} & 0.2585 & 0.3227 \\
 \ding{175} vo\_row & 0.3817 & 0.1749 & 0.3178 & 0.5386 & 0.4988 & 0.2629 & 0.1500 & 0.2548 & 0.3224 \\
\midrule
 \ding{172}+\ding{174} & \textbf{0.5214} & 0.1869 & 0.3165 & 0.5413 & 0.4980 & 0.2634 & 0.1340 & 0.2585 & \textbf{0.3400} \\
 \ding{173}+\ding{174} & 0.4483 & \textbf{0.2014} & 0.3165 & 0.5305 & 0.5059 & \textbf{0.2656} & 0.1320 & 0.2578 & 0.3323 \\
 \ding{172}+\ding{175} & 0.4226 & 0.1783 & 0.3199 & 0.5310 & 0.5075 & 0.2645 & 0.1260 & 0.2377 & 0.3234 \\
 \ding{173}+\ding{175} & 0.3994 & 0.1724 & 0.3224 & \textbf{0.5430} & \textbf{0.5170} & 0.2621 & 0.1340 & 0.2377 & 0.3235 \\
\midrule
 \ding{172}+\ding{174}(init) & 0.4703 & 0.1860 & 0.3182 & 0.5299 & 0.4964 & 0.2637 & 0.1300 & \textbf{0.2623} & 0.3321 \\
 \ding{173}+\ding{174}(init) & 0.3960 & 0.1792 & 0.3207 & 0.5386 & 0.5036 & 0.2652 & 0.1120 & 0.2355 & 0.3189 \\
 \ding{172}+\ding{175}(init) & 0.4312 & 0.1766 & 0.3203 & 0.5308 & 0.4949 & 0.2644 & 0.1220 & 0.2541 & 0.3243 \\
 \ding{173}+\ding{175}(init) & 0.3917 & 0.1817 & 0.3199 & 0.5337 & 0.5043 & 0.2631 & 0.1240 & 0.2311 & 0.3187 \\
\bottomrule
\end{tabular}
\end{sc}
\end{small}
\end{center}
\vskip -0.1in
\end{table*}

\begin{table*}[!h]
\caption{Zero-shot evaluation results on different training steps on llama\_moe-5B-A0.8B.}
\label{tab:zero-shot-results_all}
\begin{center}
\begin{small}
\begin{sc}
\begin{tabular}{l|ccccccc}
\toprule
Metric & 1w & 2w & 3w & 4w & 5w & 6w & 7w \\
\midrule
Trained Tokens & 1.43B & 2.86B & 4.29B & 5.72B & 7.15B & 8.58B & 10B \\
\midrule
\multicolumn{8}{l}{\textit{Baseline}} \\
\midrule
ARC-c & 19.28 & 18.69 & 17.66 & 19.11 & 20.22 & 19.20 & 20.73 \\
ARC-e & 32.74 & 37.54 & 41.33 & 42.80 & 43.43 & 43.56 & 44.57 \\
BoolQ & 48.10 & 60.06 & 60.55 & 62.05 & 61.13 & 61.07 & 49.36 \\
H-Swag & 26.80 & 27.32 & 27.86 & 28.47 & 28.97 & 29.44 & 29.84 \\
OpenBQ & 11.80 & 12.80 & 13.80 & 13.40 & 14.00 & 16.00 & 15.60 \\
PIQA & 57.73 & 58.60 & 60.39 & 61.53 & 62.40 & 63.11 & 62.89 \\
Wino & 52.49 & 50.12 & 53.35 & 51.46 & 50.75 & 49.80 & 52.72 \\
AVG & 35.56 & 37.88 & 39.28 & 39.83 & 40.13 & 40.31 & 39.39 \\
PPL(wikitext2) & 70.93 & 48.17 & 40.72 & 35.48 & 33.29 & 31.16 & 29.45 \\
\midrule
\multicolumn{8}{l}{\textit{QK}} \\
\midrule
ARC-c & 18.26 & 18.43 & 18.60 & 17.83 & 18.77 & 18.34 & 20.22 \\
ARC-e & 33.00 & 37.58 & 39.86 & 42.68 & 44.19 & 45.29 & 44.78 \\
BoolQ & 49.11 & 62.23 & 62.08 & 62.17 & 61.87 & 61.80 & 61.41 \\
H-Swag & 26.65 & 27.34 & 28.17 & 28.57 & 28.95 & 29.43 & 29.74 \\
OpenBQ & 12.00 & 14.00 & 14.60 & 15.00 & 16.00 & 15.00 & 14.60 \\
PIQA & 57.13 & 59.63 & 60.88 & 62.35 & 63.00 & 63.66 & 64.31 \\
Wino & 50.67 & 52.41 & 50.75 & 51.85 & 50.51 & 52.88 & 52.49 \\
AVG & 35.26 & 38.80 & 39.28 & 40.06 & 40.47 & 40.91 & 41.08 \\
Metric.Gain(\%) & -0.84 & \textbf{+2.43} & 0.00 & \textbf{+0.58} & \textbf{+0.85} & \textbf{+1.49} & \textbf{+4.29} \\
PPL(wikitext2) & 69.44 & 47.12 & 39.74 & 34.97 & 32.86 & 30.99 & 29.19 \\
PPL.Gain(\%) & \textbf{+2.10} & \textbf{+2.19} & \textbf{+2.42} & \textbf{+1.43} & \textbf{+1.29} & \textbf{+0.54} & \textbf{+0.85} \\
\midrule
\multicolumn{8}{l}{\textit{QKVO}} \\
\midrule
ARC-c & 17.58 & 18.60 & 18.09 & 19.97 & 18.77 & 20.39 & 18.77 \\
ARC-e & 35.40 & 38.22 & 40.74 & 43.77 & 43.94 & 44.82 & 44.19 \\
BoolQ & 60.70 & 61.80 & 60.09 & 61.74 & 61.68 & 61.80 & 59.79 \\
H-Swag & 26.75 & 27.49 & 28.12 & 28.69 & 28.89 & 29.40 & 29.89 \\
OpenBQ & 11.40 & 13.00 & 16.40 & 15.60 & 16.40 & 15.40 & 17.80 \\
PIQA & 58.38 & 59.41 & 61.26 & 62.68 & 62.62 & 63.11 & 63.60 \\
Wino & 49.25 & 49.09 & 51.70 & 52.33 & 50.04 & 49.88 & 50.99 \\
AVG & 37.07 & 38.23 & 39.49 & 40.68 & 40.33 & 40.69 & 40.72 \\
Metric.Gain(\%) & \textbf{+4.25} & \textbf{+0.92} & \textbf{+0.53} & \textbf{+2.13} & \textbf{+0.50} & \textbf{+0.94} & \textbf{+3.38} \\
PPL(wikitext2) & 68.11 & 46.25 & 39.17 & 34.85 & 32.77 & 30.80 & 29.21 \\
PPL.Gain(\%) & \textbf{+3.98} & \textbf{+3.98} & \textbf{+3.82} & \textbf{+1.77} & \textbf{+1.56} & \textbf{+1.15} & \textbf{+0.82} \\
\bottomrule
\end{tabular}
\end{sc}
\end{small}
\end{center}
\vskip -0.1in
\end{table*}

\end{document}